\newtheorem{theorem}{Theorem}
\newtheorem{lemma}[theorem]{Lemma}
\theoremstyle{definition}
\title{\LARGE \bf
  CoCoPlan: Adaptive Coordination and Communication for Multi-robot Systems
  in Dynamic and Unknown Environments
}
\author{Xintong Zhang$^{2, \dagger}$, Junfeng Chen$^{1, \dagger}$, Yuxiao Zhu$^2$, Bing Luo$^2$,
and Meng Guo$^1$% <-this % stops a space
\thanks{Received 29 August 2025; Revised 29 November 2025; Accepted 29 December 2025.
  This article was recommended for publication by Editor M. Ani Hsieh upon evaluation
  of the Associate Editor and Reviewers' comments.}
\thanks{This work was supported by the National Natural Science Foundation
of China (NSFC) under grants U2241214 and T2121002.}%
\thanks{$^\dagger$Equal contribution.
  The authors are with $^1$the School of Advanced Manufacturing and Robotics,
  Peking University, Beijing 100871, China;
and $^2$the Division of Natural and Applied Sciences, Duke Kunshan University, Suzhou 215316, China.}
\thanks{Corresponding author: Meng Guo, \texttt{meng.guo@pku.edu.cn}.}% <-this % stops a space
\thanks{Digital Object Identifier (DOI): see top of this page.}
}
\begin{document}
\maketitle
 \thispagestyle{empty}
\pagestyle{empty}

%%========================================
\begin{abstract}
  Multi-robot systems can greatly enhance efficiency through coordination and collaboration,
  yet in practice,
full-time communication is rarely available and interactions are constrained to close-range
exchanges. Existing methods either maintain all-time connectivity, rely on fixed schedules,
or adopt pairwise protocols, but none adapt effectively to dynamic spatio-temporal task
distributions under limited communication, resulting in suboptimal coordination. To address
this gap, we propose {CoCoPlan}, a unified framework that co-optimizes collaborative
task planning and team-wise intermittent communication. Our approach integrates a
branch-and-bound architecture that jointly encodes task assignments and communication
events, an adaptive objective function that balances task efficiency against communication
latency, and a communication event optimization module that strategically determines when,
where and how the global connectivity should be re-established.
Extensive experiments demonstrate that it outperforms state-of-the-art methods
by achieving a {22.4\%} higher task completion rate,
reducing communication overhead by \textbf{58.6\%},
and improving the scalability by supporting up to \textbf{100 robots}
in dynamic environments.
Hardware experiments include the complex 2D office environment and large-scale 3D disaster-response scenario.
\end{abstract}
%% ========================================
%%========================================
\begin{IEEEkeywords}
    Multi-robot System,
    Task Planning,
    Adaptive Communication.
\end{IEEEkeywords}
%%========================================
%==============================
\section{Introduction}\label{sec:intro}
Multi-agent systems improve efficiency through concurrent execution and collaboration,
particularly in complex scenarios. Traditional approaches focus either on spatial task planning~\cite{huang2025novel},
or on collective control such as formation maintenance~\cite{ren2011distributed}, while assuming
unrestricted communication for instantaneous information sharing. This assumption breaks down
in real deployments, where communication is limited by obstacles~\cite{yu2021resilient}, or range
constraints~\cite{da2024communication}.
Such limitations necessitate coordination of communication events. The challenge intensifies in dynamic unknown
environments with unpredictable tasks and workspace models. Existing methods remain limited:
static schedules~\cite{huang2025novel} fail under unexpected tasks; dynamic
approaches~\cite{buyukkocak2021planning} require prior knowledge of distributions; and pairwise
protocols~\cite{aragues2020intermittent} lack team-wide coordination, leading to degraded
information flow. This creates a critical gap in jointly handling unknown spatiotemporal task
distributions and team-wide intermittent communication under constraints.
Three fundamental challenges arise: the {online discovery}
demands rapid replanning of assignments and communication when tasks emerge~\cite{chen2025real};
the {sparse connectivity management} requires
strategic use of limited communication windows to preserve coordination~\cite{saboia2022achord};
the temporal constraints among tasks must be maintained between task deadlines and intermittent
communication opportunities~\cite{guo2018multirobot}.
These challenges amplify with large fleets,
where co-optimizing tasks and communication becomes combinatorially complex.

%=============================
\begin{figure}[!t]
    \centering
    \includegraphics[width=0.9\linewidth, height=0.85\linewidth]{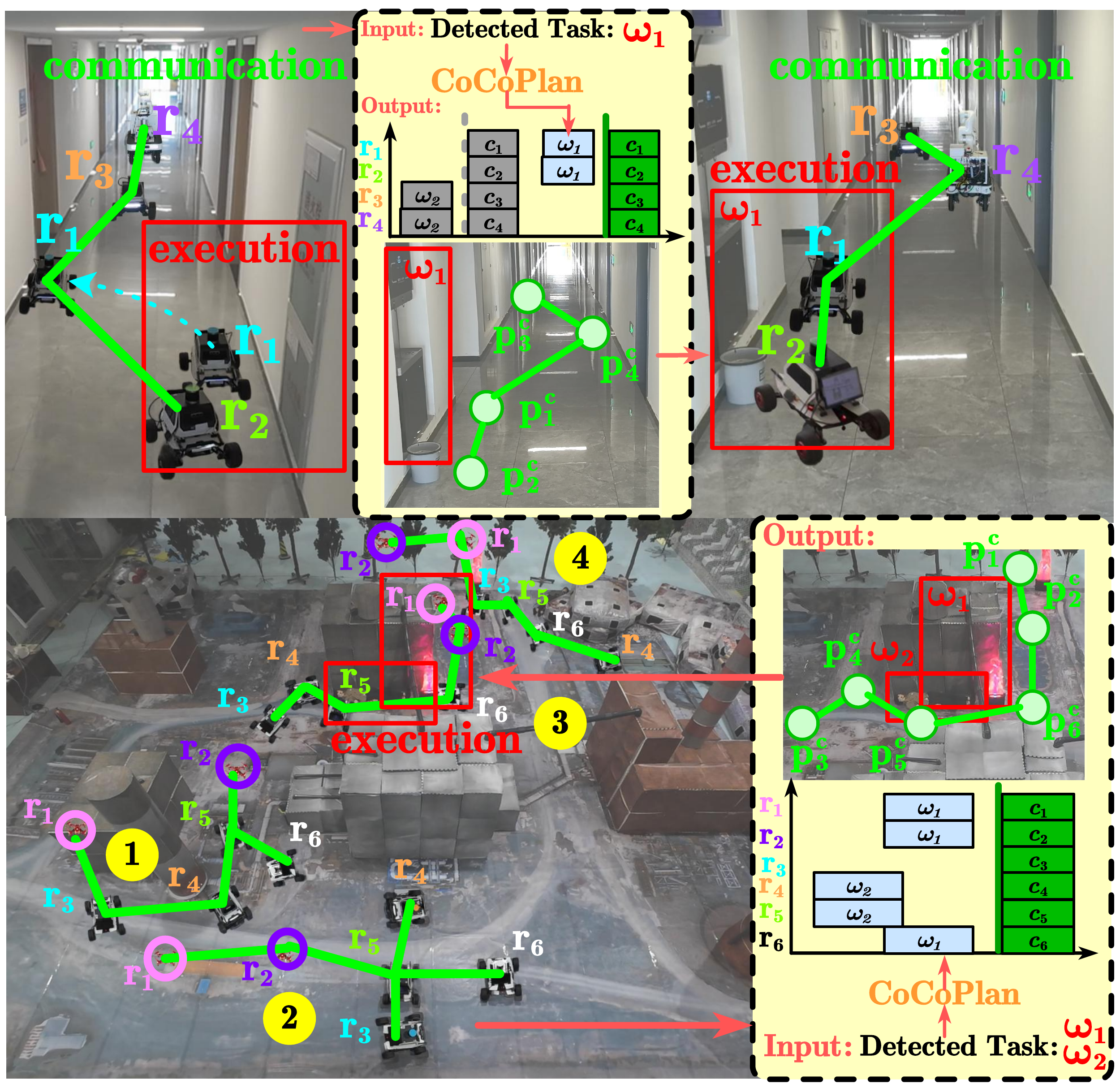}
    \vspace{-2mm}
    \caption{
      Hardware demonstration of the proposed adaptive coordination and communication scheme
        ($r_i$ denotes an individual hardware agent $i$, $\forall i \in \mathcal{N}$):
      $4$ UGVs to fulfil office errands such as delivery and cleaning as requested online (\textbf{top});
      $2$ UAVs and $4$ UGVs to detect fire hazards, search and rescue victims
      during a disaster-response mission (\textbf{bottom}).
    }
    \label{fig:first}
    \vspace{-5mm}
  \end{figure}
%==============================

\vspace{-0.8em} 

\subsection{Related work}
%%Formal methods on MAS

%==============================
\begin{figure*}[!t]
    \centering
    \includegraphics[width=0.9\linewidth,height=0.25\linewidth]{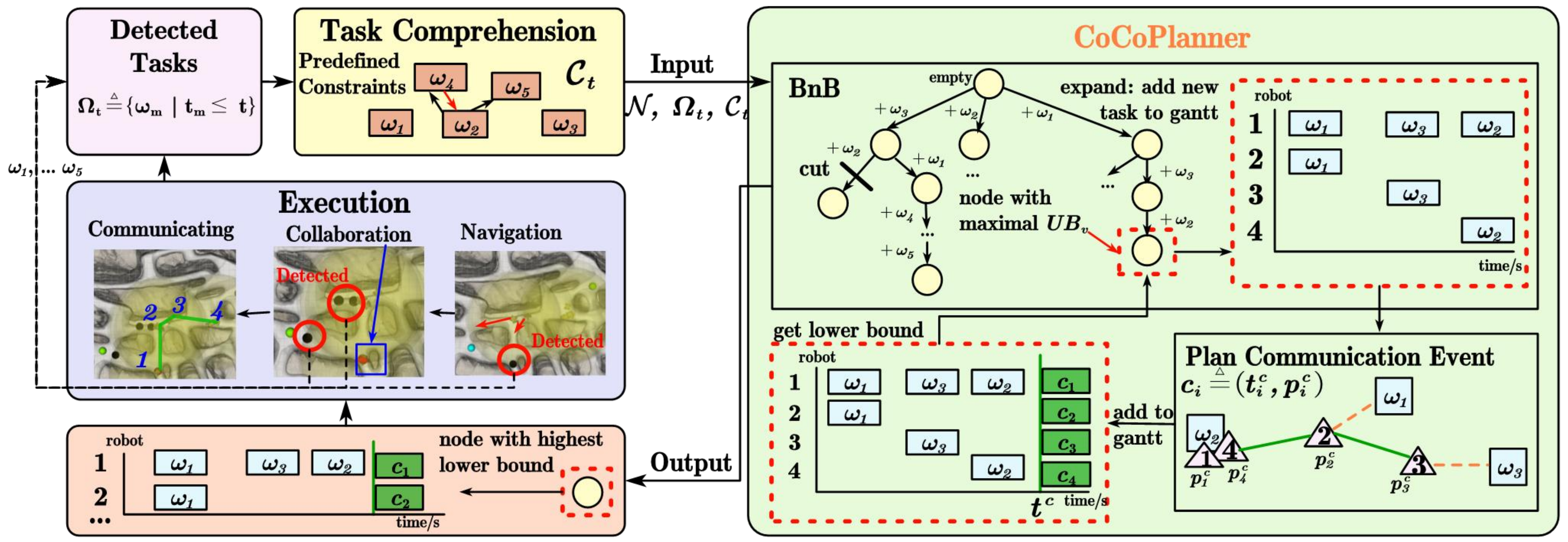}
    \vspace{-2mm}
    \caption{Overview of the \textbf{CoCoPlan} framework,
      which integrates real-time task detection,
      a branch-and-bound planner for joint task and communication scheduling,
      and an adaptation scheme to unknown spatio-temporal task distributions.}
    \label{fig:framework}
    \vspace{-4mm}
  \end{figure*}
  %==============================

\subsubsection{Coordination under Limited Communication}
Communication is critical for multi-agent systems, especially in scenarios such as exploration and inspection where agents
must exchange information to maintain alignment and improve outcomes. Existing approaches
either employ an end-to-end connectivity model with constant communication, or an intermittent
model where agents communicate only when necessary. Both involve trade-offs in efficiency,
flexibility, and adaptability. All-time communication usually focuses on connectivity control,
often using graph theory~\cite{griparic2022consensus, nguyen2024connectivity}, either by maintaining all initial links~\cite{griparic2022consensus}
or dynamically adding and removing them while preserving connectivity~\cite{derbakova2011decentralized}.
While this approach guarantees reliable data transfer, 
it often overlooks the communication needs of high-level, collaborative tasks. 
Moreover, it can be impractical in the presence of wireless channel uncertainties.
Some studies address the joint constraints of connectivity
and temporal tasks~\cite{luo2019minimum}, yet they
typically impose strict proximity requirements, reducing overall efficiency in large-scale deployments.

\subsubsection{Intermittent Communication under Complex Temporal Tasks}
To overcome these limitations, intermittent communication has been proposed as an alternative
solution~\cite{aragues2020intermittent, guo2018multirobot, kantaros2019temporal}. In this model, the communication network can be
temporarily interrupted, and agents exchange information only at certain times.
To this end, several related works adopt intermittent communication
by predefining fixed intervals and optimizing meeting locations accordingly,
including centralized~\cite{wang2023multi}, distributed~\cite{kantaros2016distributed}
and predetermined meeting-point approaches~\cite{da2024communication}.
However, these works typically fix communication times while optimizing locations,
lacking simultaneous optimization of both aspects.
Recent work mainly studies pair-wise intermittent communication~\cite{aragues2020intermittent,
guo2018multirobot, kantaros2019temporal}, where communication occurs only when necessary. While
this improves efficiency, it enforces rigid patterns such as fixed schedules~\cite{aragues2020intermittent,
kantaros2019temporal} or predetermined meeting points~\cite{guo2018multirobot}, and introduces
delays due to asynchronous execution, which is problematic for tasks that need timely coordination. Thus,
all-time communication sacrifices task efficiency, while pairwise protocols slow
information propagation. To mitigate these drawbacks, team-wise intermittent communication was proposed,
in which all agents periodically re-establish global connectivity~\cite{saboia2022achord}.
This differs from sub-team or clustered communication~\cite{ren2023matcpf, yang2024teamwise},
where ``team-wise'' denotes coordination within sub-teams or dynamically clustered groups.
Nevertheless, existing team-wise approaches still assume known environments
and task distributions, limiting applicability in uncertain scenarios.

\vspace{-0.9em} 

\subsection{Our Method}
In this work,
a multi-agent system handles complex temporally constrained tasks continuously released online as shown in Fig.~\ref{fig:first}.
We formulate an online co-optimization problem to synthesize the collaborative task execution
and communication strategies simultaneously.
Unlike periodic team-wise or intermittent pair-wise communication protocols,
our approach dynamically optimizes \emph{when}, \emph{where} and \emph{how}
agents communicate or collaborate
based on real-time task specifications gathered online.
To achieve this, we develop a branch-and-bound (BnB) framework with
joint encoding of task planning and communication events within unified search nodes.
A novel objective function balances the execution efficiency against communication latency
for robustness across spatio-temporal task distributions.
Moreover, communication events are optimized via an iterative algorithm
that strategically determines the communication locations and timings.
As shown in Fig.~\ref{fig:first}, the method is validated through large-scale simulations
and hardware experiments against several strong baselines.

The main contributions are three-fold:
(I) A general online co-design of task planning and team-wise communication strategies,
ensuring the timely information propagation and efficient task fulfillment;
(II) An adaptive coordination mechanism via a novel objective function robust
to unknown task distributions and dynamic environments;
(III) Extensive validation in simulations and hardware experiments
demonstrating the effectiveness in complex real-world scenarios.

%%========================================

\section{Problem Description}\label{sec:problem}

% ================================
\subsection{Workspace and Robot Model}
\label{subsec:env-task-model}

Consider a team of~$N$ agents that share the common
workspace~$\mathcal{W}\subset \mathbb{R}^2$.
Each agent~$i\in \mathcal{N}\triangleq \{1,\cdots,N\}$ is described
by its position~$p_i\in \mathcal{W}_i$,
sensor range~$d_i > 0$,
and the action~$a_i\in \mathcal{A}_i$,
where~$\mathcal{W}_i\subseteq \mathcal{W}$ is the allowed workspace;
and~$\mathcal{A}_i$ is the set of primitive actions.
More specifically, the actions $\mathcal{A}_i$ require different
capabilities from agent~$i$. For instance, repair and rescue actions necessitate
fine-grained manipulation capabilities,
while delivery requires transportation capacity.
The motion of each agent is characterized by its velocity~$v_i\in \mathbb{R}^2$ with a upper limit $\|v_i\| \leq v^{\texttt{max}}_i$,
note that $\|v_i\|$ can be adjusted during navigation.
Thus, the local plan of an agent is given by a sequence of
timed goal positions and performed actions,
i.e.,~$\tau_i\triangleq (t^1_i,\,g^1_i,\,a^1_i)
(t^2_i,\,g^2_i,\,a^2_i)\cdots$
with~$t^\ell_i\geq 0$,~$g^\ell_i \in \mathcal{W}_i$,~$a_i^\ell \in \mathcal{A}_i$ being
the time instant, goal position and action,
$\forall \ell\geq 1$.
In other words, under this local plan,
agent~$i\in \mathcal{N}$ should navigate to~$g_i^\ell$ with velocity $v_i$
and start performing~$a_i^\ell$ from time~$t_i^\ell$, for all~$\ell\geq 1$.
The collective plan for all agents is denoted by~$J \triangleq \{\tau_i\}, \forall i \in \mathcal{N}$.

\vspace{-0.8em} 
% ================================
\subsection{Task Specification with Temporal Constraints}
\label{subsec: task-spec}

Collaborative tasks are dynamically released with unknown and possibly time-varying spatio-temporal distributions,
each defined as~$\omega_m \triangleq (S_m, \eta_m, \{(n_j, a_j)\}_{j=1}^{J_m})$ where
and each pair $\{(n_j, a_j)\}$ indicates that $n_j$ agents are required to perform action $a_j$ within $S_m$.
A task is detected only when $\textbf{min}_{i \in \mathcal{N}} \|p_i(t)-S_m\| \leq d_i$ and no obstacle between $p_i(t)$ and $S_m$ at $t_m$,
where~$\|\cdot\|$ is the Euclidean distance.
The detected tasks will later become
known to all robots during the team-wise communication process.
The set of detected tasks by time $t$ is $\boldsymbol{\Omega}_t \triangleq \{ \omega_m \mid t_m \leq t \}$.
For each task $\omega_r$, its execution begins at $t_r^\texttt{s} > 0$ and finishes at $t_r^\texttt{f} >0$.
Three types of temporal constraints are predefined between tasks $\omega_p$ and $\omega_q$:
  (1) precedence ($\omega_p \preceq \omega_q$) holds if $t_p^\texttt{f} \leq t_q^\texttt{s}$;
  (2) mutual exclusion ($\omega_p \parallel \omega_q$) holds if their execution intervals are disjoint but no specific ordering is imposed, i.e., $[t_p^\texttt{s}, t_p^\texttt{f}] \cap [t_q^\texttt{s}, t_q^\texttt{f}] = \emptyset$;
  (3) concurrency ($\omega_p \sim \omega_q$) holds if parallel execution with overlapping time intervals is permitted, i.e., $[t_p^\texttt{s}, t_p^\texttt{f}] \cap [t_q^\texttt{s}, t_q^\texttt{f}] \neq \emptyset$.
The set $\mathcal{P}_t$ contains all predefined temporal relations between pairs of tasks in $\boldsymbol{\Omega}_t$, i.e.,
\begin{equation}\label{eq:task-constraints}
\mathcal{P}_t \subseteq \big\{
    (\omega_i, \omega_j, \kappa)
    \, |\,
    \omega_i, \omega_j \in \boldsymbol{\Omega}_t,
    \kappa \in \{\preceq, \parallel, \sim\}
\big\},
\end{equation}
which can only be known at time~$t\geq 0$.
A task $\omega_m$ completes at finish time $t_m^\texttt{f}$ if two conditions hold:
(I) at time $t_m^\texttt{s}$, agents arrive at $S_m$ and for each required action $(n_j, a_j)$,
a team of $n_j$ agents continuously performs $a_j$ within $S_m$ for duration $\eta_m$;
and (II) the execution interval $[t_m^\texttt{s}, t_m^\texttt{f}]$
satisfies all relevant constraints in $\mathcal{P}_{t}$
involving $\omega_m$.

% ================================

%=============================
\begin{figure}[!t]
    \centering
    \includegraphics[width=0.9\linewidth,height=0.35\linewidth]{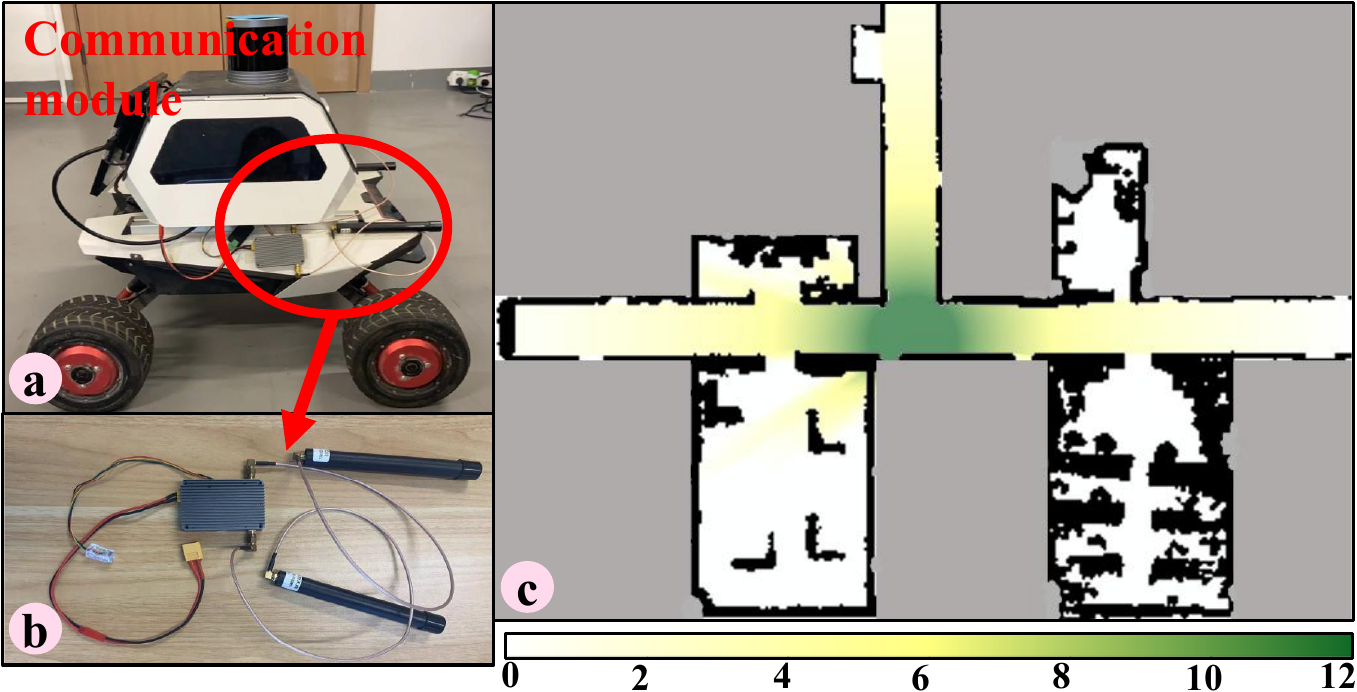}
    \vspace{-2mm}
    \caption{
      (\textbf{Left}): Ad-hoc local communication devices used in the hardware experiments;
      (\textbf{Right}): Map of the predicted communication quality used in the coordination
      of communication events.
    }
    \label{fig:communication-constraints}
    \vspace{-4mm}
  \end{figure}
%==============================
\vspace{-0.8em}
\subsection{Communication Constraints}
\label{subsec:comm-const}
Agents cannot maintain the continuous real-time communication due to physical limitations.
As illustrated in Fig.~\ref{fig:communication-constraints}, successful communication
  between agents $i, j \in \mathcal{N}$ requires sufficient communication quality $Q_{ij}(t) > \delta$,
  where $\delta > 0$ is the minimum quality threshold. The communication quality $Q_{ij}(t)$ is defined as:
  $Q_{ij}(t) \triangleq P_t - PL(d_{ij}(t)) - \alpha\, d_{ij}^{\mathrm{obs}}(t)$,
  for locations $p_i, p_j \in \mathcal{W}_i, \mathcal{W}_j$,
where $P_t$ denotes the transmission power in dB,
and $PL(d_{ij}(t)) \triangleq PL(d_0) + 10 n \log_{10}\left(\frac{d_{ij}(t)}{d_0}\right)$ represents the free-space path loss,
with $d_0$ as the reference distance, $n$ as the path loss exponent,
and $d_{ij}(t) \triangleq \|p_i(t) - p_j(t)\|$ as the distance between agents $i$ and $j$ at time $t$.
Note that $d_{ij}^{\mathrm{obs}}(t)$ denotes the total length of obstacles
along the straight line connecting $p_i(t)$ and $p_j(t)$,
with $\alpha>0$ as the attenuation coefficient per meter of obstacle.
This constrained connectivity is modeled as a time-varying graph $G(t) \triangleq (\mathcal{N},\, \mathcal{E}(t))$
with the edge set
$\mathcal{E}(t) \triangleq \left\{ e_{ij} \,|\, Q_{ij}(t) > \delta \right\}$,
where~$e_{ij}$ is the communication link between agents $i$ and $j$.
The agent team $\mathcal{N}$ achieves the global communication at the time $t > 0$
if $G(t)$ is connected, namely:
$G(t) \in \mathcal{G}_{\texttt{c}}$,
where~$\mathcal{G}_{\texttt{c}}$ is the set of connected graphs.
We refer to this as team-wise intermittent communication,
requiring global connectivity across all agents at each event to enable centralized-style replanning.
To establish the global connectivity,
agents synchronize at the designated \textit{communication events}
defined as $\mathbf{C} \triangleq \left\{ \mathbf{c}_i \mid \forall i \in \mathcal{N} \right\}$,
where each agent-specific event $\mathbf{c}_i \triangleq (t^{\texttt{c}}_i, p^{\texttt{c}}_i)$
specifies the communication time $t^{\texttt{c}}_i$ and position $p^{\texttt{c}}_i$.
The global event $\mathbf{C}$ occurs
when all agents simultaneously reach their respective communication positions at their common times $t^{\texttt{c}}$,
i.e., $t^{\texttt{c}} = t^{\texttt{c}}_i, \, \forall i \in \mathcal{N}$.
A valid communication event $\mathbf{C}$ must ensure that:
\begin{equation}\label{eq:comm-graph}
  G(t^{\texttt{c}}) \in \mathcal{G}_{\texttt{c}},
\end{equation}
where $t^{\texttt{c}}$ is the communication time.
Once $G(t^{\texttt{c}})$ is connected, agents exchange critical information including:
their individually detected tasks $\Omega_t^i$,
locally generated plans $\tau_i$,
and future communication events~$\mathbf{c}_i$.
Subsequently, the local plan for agent $i$ extends to
$
  \tau_i \triangleq (t^1_i,\,g^1_i,\,a^1_i)(t^2_i,\,g^2_i,\,a^2_i)\cdots\mathbf{c}_i
$.

% ================================
\subsection{Problem Statement}\label{subsec:prob}

The overall objective is to synthesize the collective
plans $J \triangleq \{\tau_i\}_{i \in \mathcal{N}}$ so as to maximize
the long-run average rate of task completion over an infinite
horizon.
For a given collective plan $J$ and time $T>0$, define
  $N_T(J) \triangleq
  \left|
    \left\{
      \omega_m \in \boldsymbol{\Omega}_T
      \,\middle|\,
      t_m^{\texttt{f}}(J) \le T
    \right\}
  \right|$,
which denotes the number of tasks completed by time $T$ under plan $J$.
The long-run average task completion rate of $J$ is then
$
  \eta(J) \triangleq
  \mathbf{lim}_{T \to \infty}
  \frac{N_T(J)}{T}$.
  Thus, the overall problem is as follows:
\begin{equation}\label{eq:objective}
  \begin{aligned}
    & \underset{J}{\mathbf{max}} \quad
      \eta(J) \\
    & \mathbf{s.t.} \quad
      \eqref{eq:task-constraints} - \eqref{eq:comm-graph}, \;\forall i \in \mathcal{N},
  \end{aligned}
\end{equation}
where the collective plan $J$ that maximizes
the average rate~$\eta(J)$;
$\eqref{eq:task-constraints}$ ensures that all
temporal relations between tasks are satisfied;
and $\eqref{eq:comm-graph}$ guarantees that the team achieves
intermittent global connectivity at communication events.

%%========================================.
\vspace{-0.8em}
\section{Proposed Solution}\label{sec:solution}

A unified framework co-optimizes collaborative task planning and communication in dynamic, unknown environments.
It combines a Branch-and-Bound method for temporally constrained planning and communication scheduling
in Sec. \ref{subsec:coord-framework}, an iterative approach for optimizing communication events
in Sec. \ref{subsec:comm-event}, and an online execution scheme with complexity analysis
and theoretical guarantees in Sec. \ref{subsec:online}.

% ================================
% ===============================
\vspace{-0.8em}
\subsection{Adaptive Coordination Framework}
\label{subsec:coord-framework}

% -----------------------------
\subsubsection{Design of Adaptive Optimization Objective}
\label{subsubsec:new-obj}

To handle unknown spatiotemporal task distributions while still achieving the global
objective~\eqref{eq:objective}, an adaptive optimization objective is formulated to
maximize task completion efficiency within each execution horizon. The coordination
framework is triggered at every communication event occurring at time~$t$. Based on the
currently detected task set $\mathbf{\Omega}_t$, the framework jointly optimizes the
future local plans~$J$ together with the scheduling of the next communication
event~$\mathbf{c}$. The team-wise intermittent communication protocol
% fix the t in \tau_i here, current \tau_i do not contain time
% 在J的定义前面加：for simplicity，J的定义中将\tau_i中的tuple整合成了对应的\omega
guarantees that all tasks assigned in $\{\tau_i\}$ during the interval $[t, t^{\texttt{c}}]$
are completed before the next re-planning phase at $t^{\texttt{c}}$.
The adaptive objective
is expressed as the maximization of the task completion rate per unit time in the interval
$[t, t^{\texttt{c}}]$, as an \emph{local approximation} of $\eta(J)$ in~\eqref{eq:objective}:
\begin{equation}\label{eq:adaptive_objective}
\begin{aligned}
& \underset{{\{\tau_i\}, t^{\texttt{c}}}}{\textbf{max}} \quad
\frac{ \left| \left\{ \omega_m \in \mathbf{\Omega}_t \mid t^{\texttt{f}}_m \leq t^{\texttt{c}} \right\} \right| }
{t^{\texttt{c}} - t} \\
\textbf{s.t.} \quad  &
G(t^{\texttt{c}}) \in \mathcal{G}_{\texttt{c}}, \, \mathcal{C}_t \text{ holds};
t^{\texttt{c}} \geq t^{\texttt{f}}_{i, \ell} > t,\\
&t^{\texttt{s}}_{i, \ell+1} \geq t^{\texttt{f}}_{i, \ell} + T_{\text{travel}}(\omega^\ell_i, \omega^{\ell+1}_i), \omega^\ell_i \in \mathbf{\Omega}_t \,
\end{aligned}
\end{equation}
where $t^{\texttt{f}}_m$ is the completion time of task $\omega_m$, $t^{\texttt{c}}$ denotes
the scheduled time of the next communication event, and $G(t^{\texttt{c}}) \in \mathcal{G}_{\texttt{c}}$
ensures that global communication occurs at $t^{\texttt{c}}$.

% % ===================
% \begin{remark}
%     By maximizing the task completion rate in~\eqref{eq:adaptive_objective},
%     the formulation adaptively prioritizes that task sequences that yield high efficiency .
%     This emergent spatiotemporal filtering operates without explicit environmental models,
%     as the rate-based metric intrinsically balances task throughput against temporal investment.
%     \hfill $\blacksquare$
% \end{remark}
% % =====================

% --------------------------------------------------
\subsubsection{Algorithm Description}
\label{subsubsec:algorithm-description}

% =========================
\begin{algorithm}[!t]
    \caption{$\texttt{CoCoPlan}(\cdot)$}
    \label{alg:cocoplan}
    \SetAlgoLined
    \SetAlgoNlRelativeSize{-1} % 调整行号的间距
    \DontPrintSemicolon
    \KwIn{Robot team $\mathcal{N}$, Detected tasks $\mathbf{\Omega}_t$, Time budget $T'$} \label{line:input}
    \KwOut{Best collective plan  $J^\star$}   \label{line:output}

    Initialize $\mathcal{T} \gets \emptyset$, $\mathcal{Q}$ (max-heap by UB) \; \label{line:init}
    Create $\nu_0$: $J_{\nu_0} \gets \emptyset$ \; \label{line:create_nu0}
    $(LB_{\nu_0}, J_{\nu_0}) \gets \texttt{LowBound}(\nu_0, \mathbf{\Omega}_t, \mathcal{N})$ \; \label{line:lower_nu0}
    $UB_{\nu_0} \gets \texttt{UpBound}(\nu_0, \mathbf{\Omega}_t, \mathcal{N})$ \; \label{line:upper_nu0}
    $\mathcal{T} \gets \mathcal{T} \cup \{\nu_0\}$, $\mathcal{Q}$.insert($\nu_0$, $UB_{\nu_0}$) \; \label{line:insert_nu0}
    $LB^\star \gets LB_{\nu_0}$, $J^\star \gets J_{\nu_0}$, \label{line:init_best}
    $t_{\texttt{start}} \gets \text{current time}$ \; \label{init-time}

    \While{$\mathcal{Q} \neq \emptyset$ \textbf{and} $(t- t_{\texttt{start}} < T')$}{ \label{line:while_start}
        $\nu \leftarrow \mathcal{Q}$.extractMax()  \label{line:extract}

        \If{$UB_{\nu} > LB^\star$}{ \label{line:if_ub}
            \If{$LB_{\nu} > LB^\star $}{$LB^\star \gets LB_{\nu}$, $J^\star \gets J_{\nu}$} \label{line:update_best}

            $\mathcal{F} \leftarrow \texttt{GetTasks}(\nu, \mathbf{\Omega}_t)$ \; \label{line:feasible}
            \ForEach{$\omega \in \mathcal{F}$}{ \label{line:for_start}
                $\nu_+ \leftarrow \texttt{ExpNode}(\nu, \omega)$ \; \label{line:expand}
                $(LB_{\nu_+}, J_{\nu_+}) \gets \texttt{LowBound}(\nu_+, \mathbf{\Omega}_t, \mathcal{N})$ \; \label{line:lower_new}
                $UB_{\nu_+} \gets \texttt{UpBound}(\nu_+, \mathbf{\Omega}_t, \mathcal{N})$ \; \label{line:upper_new}
                $\mathcal{T} \gets \mathcal{T} \cup \{\nu_+\}$ \; \label{line:insert_new}
                \If{$LB_{\nu_+} > LB^\star$}{$LB^\star \gets LB_{\nu_+}$, $J^\star \gets J_{\nu_+}$} \label{line:update_new}
                \If{$UB_{\nu_+} > LB^\star$}{$\mathcal{Q}$.insert($\nu_+, UB_{\nu_+}$)} \label{line:insert_new}
            } \label{line:for_end}
        } \label{line:if_end}
    } \label{line:while_end}
    \Return $J^\star$ \label{line:return}
\end{algorithm}
\setlength{\textfloatsep}{5pt}
% =========================

The proposed $\texttt{CoCoPlan}$ in Alg.~\ref{alg:cocoplan}
uses a branch-and-bound (BnB) search to jointly optimize task assignments
and communication events for the entire team.
Each node $\nu$ in the BnB tree corresponds to a collective plan
$J_\nu \triangleq [(\omega_{i_1}^1, \omega_{i_1}^2, \cdots, \mathbf{c}_{i_1}),
(\omega_{i_2}^1, \cdots, \mathbf{c}_{i_2}), \cdots]$,
i.e., agent-specific task sequences interleaved with communication events
$\mathbf{c}$. For brevity, the local plan elements $(t^\ell_i, g^\ell_i, a^\ell_i)$
are associated with the task $\omega^\ell_i$ in $J_\nu$.
For each node $\nu$, we maintain a lower bound $LB_\nu$ and an upper bound $UB_\nu$
on the objective value; 
nodes are pruned if $UB_\nu < LB^\star$, 
where $LB^\star$ is the current best lower bound.
The objective of each node is given by~\eqref{eq:adaptive_objective},
representing the task completion rate up to the next communication event.
The search continues until all nodes are expanded or pruned, or a time limit
is reached. It then returns the best collective plan $J^\star$ found within the
computation budget.
The algorithm follows five key stages:

\textbf{(I) Initialization} (Lines~\ref{line:init}--\ref{line:init_best}):
The root node $\nu_0$ is created with an empty plan, and its bounds
$LB_{\nu_0}$ and $UB_{\nu_0}$ are computed by
$\texttt{LowBound}$ and $\texttt{UpBound}$, providing pessimistic and
optimistic estimates of achievable performance. These values initialize
the max-heap $\mathcal{Q}$, which stores nodes ordered by $UB_\nu$, and
the incumbent solution $(LB^\star, J^\star)$. A timer $t_{\texttt{start}}$
is started to enforce the computation budget $T'$.

\textbf{(II) Selection} (Lines~\ref{line:while_start}--\ref{line:if_ub}):
At each iteration, the node $\nu$ with the largest $UB_\nu$ is extracted
from $\mathcal{Q}$ (Line~\ref{line:extract}), following a best-first
strategy that prioritizes nodes with the highest potential to improve
the incumbent. If $UB_{\nu} \leq LB^\star$, then $\nu$ and its subtree
are pruned, since they cannot yield a better solution. This pruning
significantly reduces the explored search space.

\textbf{(III) Expansion} (Lines~\ref{line:feasible}--\ref{line:for_end}):
For each surviving node, feasible tasks $\mathcal{F}$ that respect the
temporal constraints $\mathcal{C}_t$ are identified
(Line~\ref{line:feasible}).
The $\texttt{ExpNode}$ function (Line~\ref{line:expand})
generates child nodes $\nu_+$ by assigning each task
$\omega \in \mathcal{F}$ to eligible agents and inserting it in their
plans before the next communication event~$\mathbf{c}_{i_k}$.
This couples task allocation with communication scheduling. Each child
is then evaluated to obtain new bounds $LB_{\nu_+}$ and $UB_{\nu_+}$.

\textbf{(IV) Bounding} (Lines~\ref{line:lower_nu0}--\ref{line:upper_new}):
Each child $\nu_+$ undergoes iterative bound refinement starting from
its plan $J_{\nu_+}$.
As illustrated in Fig.~\ref{fig:lower-bound}, at iteration $k$
a feasible task $\omega^{(k)}$ is chosen from
$\mathbf{\Omega}_t \setminus \mathbf{\Omega}_{\nu_+}$ and assigned to
an eligible agent group $\mathcal{A} \in \mathcal{A}_k$ that minimizes
the maximum travel time to the task location:
\begin{equation}\label{eq:task-assign}
% \[
\underset{\mathcal{A} \in \mathcal{A}_k}{\mathbf{min}}
\left\{
  \underset{i \in \mathcal{A}}{\mathbf{max}}\,
  T_{\text{travel}}(p^{\texttt{f}}_i, \omega^{(k)})
\right\},
% \]
\end{equation}
yielding an updated plan $J^{(k)}$ for node $\nu^{(k)}$.
Its completion time is computed in two phases:
(1) \textit{Task phase}: non-communication tasks are scheduled by solving:
$
\mathbf{min} \;
{\mathbf{max}_{\omega_m \in\, \mathbf{\Omega}_{\nu^{(k)}}}} t_m^{\texttt{f}},
$
using a standard LP solver such as OR-Tools~\cite{glop}, temporarily
ignoring communication; and (2) \textit{communication phase}: given the
resulting task schedule, $\texttt{ComOpt}$ optimizes communication
events and returns $t^{\texttt{c}}_{(k)}$.
The corresponding candidate
rate is
$b^{(k)} = |\omega_m \in \mathbf{\Omega}_{\nu^{(k)}}| / t^{\texttt{c}}_{(k)}$.
Tasks are added while $b^{(k)}$ improves, and the lower bound
$LB_{\nu_+}$ is set to $\max_j b^{(j)}$. The upper bound $UB_{\nu_+}$
is obtained by applying the same procedure but assuming instantaneous
agent movement, giving an optimistic estimate for pruning.

\textbf{(V) Termination} (Line~\ref{line:return}):
The algorithm stops when the heap $\mathcal{Q}$ is empty or the time
budget $T'$ is reached, and returns the incumbent plan $J^\star$ as the
best solution found. Since bounds and solutions are refined throughout
the search, the method is anytime: even early termination yields a
feasible plan.

%=============================
\begin{figure}[!t]
    \centering
    \includegraphics[width=0.85\linewidth]{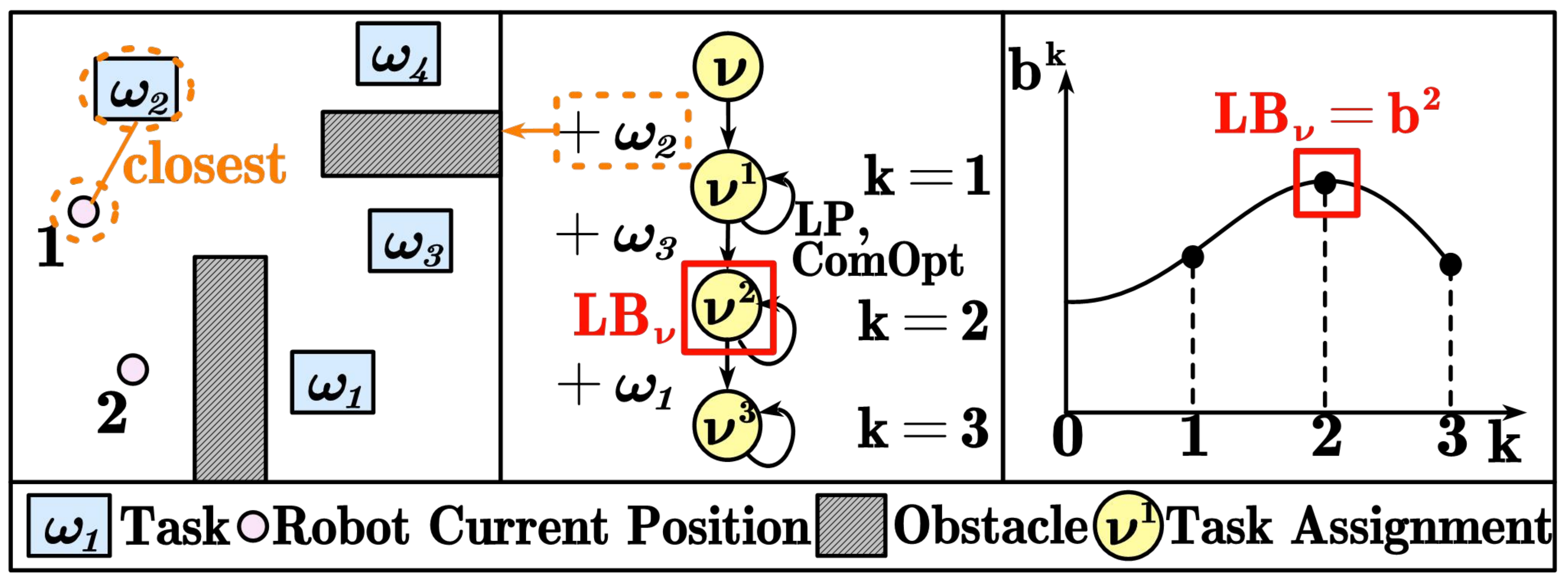}
    \vspace{-2mm}
    \caption{
      Illustration of the iterative process for computing lower bounds,
      after inserting new tasks in the collective plan.
    }
    \label{fig:lower-bound}
    \vspace{0mm}
  \end{figure}
%==============================

% ================================
% ================================
\vspace{-0.8em} 
\subsection{Optimization of Communication Events}
\label{subsec:comm-event}

% -----------------------------
\subsubsection{Problem of Communication Events Optimization}
\label{subsubsec:comm-event-opt}

Given the non-communication solution $J_{\nu^+}$ obtained in the previous step,
let $t^{\texttt{f}}_i$ denote the completion time of the last non-communication task
in the plan of agent $i$ as $\tau_i$, and let $p^{\texttt{f}}_i \in \mathcal{W}$ be its position
at time $t^{\texttt{f}}_i$. The subsequent communication event must occur at a time
$t^{\texttt{c}} > \max_{i \in \mathcal{N}} t^{\texttt{f}}_i$,
with agents located at positions $\{p^{\texttt{c}}_i\}$, where $p^{\texttt{c}}_i \in \mathcal{W}_i$,
and the induced communication graph $G(t^{\texttt{c}})$ satisfies the connectivity requirement
$G(t^{\texttt{c}}) \in \mathcal{G}_{\texttt{c}}$.
The optimization problem aims to minimize the maximum transition time required for agents
to travel from their last task locations to the communication point, ensuring both feasibility
and efficiency:
\begin{equation}\label{eq:comm-opt}
\begin{aligned}
& \underset{t^{\texttt{c}},\, {\{}p^{\texttt{c}}_i\}}{\textbf{min}}\,\big{\{}
\underset{{i \in \mathcal{N}}}{\textbf{max}}  \left( t^{\texttt{c}} - t^{\texttt{f}}_i \right)\big{\}} \\
\textbf{s.t.} \quad
& t^{\texttt{c}} \geq t^{\texttt{f}}_i + T_{\text{travel}}(p^{\texttt{f}}_i, p^{\texttt{c}}_i),
\quad \forall i \in \mathcal{N}, \\
& G(t^{\texttt{c}}) \in \mathcal{G}_{\texttt{c}},
\end{aligned}
\end{equation}
where the first constraint guarantees sufficient time for each agent to navigate from
its last task location $p^{\texttt{f}}_i$ to its communication position
$p^{\texttt{c}}_i$;
and the second enforces that the team remains
connected at the communication time $t^{\texttt{c}}$.
This formulation explicitly balances task feasibility with connectivity, ensuring that
communication is synchronized across the team while minimizing idle time.

% -----------------------------
\subsubsection{Algorithm Description}
\label{subsubsec:comm-event-algorithm}

% =========================
\begin{algorithm}[!t]
    \caption{$\texttt{ComOpt}(\cdot)$}
        \label{alg:comopt}
        \SetAlgoLined
        \DontPrintSemicolon
        \setcounter{AlgoLine}{0} % Ensure line numbering starts from 1
        \KwIn{Robot team $\mathcal{N}$, set of last tasks~$\{t^{\texttt{f}}_i\}$, time budget $T'$, threshold $\delta$}
        \KwOut{Communication event $\mathbf{C}_{t}$}

        $i_\ell \gets \textbf{argmax}_i t^{\texttt{f}}_i$ \; \label{line:1}
        $p_0 \gets p^{\texttt{f}}_{i_\ell}$, $t^{\texttt{c}} \gets t^{\texttt{f}}_{i_\ell}$, $t^{\texttt{b}} \gets t^{\texttt{c}}$ \; \label{line:2}
        $\mathbf{C}_{t} \gets \left\{ (t^{\texttt{b}}, p_0) \mid \forall i \in \mathcal{N}, \forall j>i \right\}$ \; \label{line:3}
        $t_{\texttt{start}} \gets \text{current time}$ \; \label{line:4}
        \While{$(t - t_{\texttt{start}} < T')$ \textbf{and} $(|t^{\texttt{c}} - t^{\texttt{b}}| \geq \Delta)$}{ \label{line:5}
            $i_{\texttt{e}} \gets \textbf{argmin}_i (t^{\texttt{f}}_i + \texttt{A}^\star(p^{\texttt{f}}_i, p_0))$ \; \label{line:6}
            $j_n \gets \textbf{argmin}_{j \neq i_{\texttt{e}}} \texttt{A}^\star(p^{\texttt{f}}_{i_{\texttt{e}}}, p_j^c)$ \; \label{line:7}
            $p^{\texttt{c}} \gets \texttt{SelCom}(p^{\texttt{f}}_{i_{\texttt{e}}}, p_{j_n}^c)$ \; \label{line:8}
            $t_{\texttt{a}} \gets t^{\texttt{f}}_{i_{\texttt{e}}} + \texttt{A}^\star(p^{\texttt{f}}_{i_{\texttt{e}}}, p^{\texttt{c}})$ \; \label{line:9}
            $t^{\texttt{c}}_+ \gets \textbf{max}( t_{\texttt{a}}, \textbf{max}_{k \neq i_{\texttt{e}}} (t^{\texttt{f}}_k + \texttt{A}^\star(p^{\texttt{f}}_k, p_k^c)) )$ \; \label{line:10}
            \If{$(Q_{c, p^c_{j_n}} > \delta)$ \textbf{and} $(t^{\texttt{c}}_+ < t^{\texttt{b}})$}{ \label{line:11}
                $p_{i_{\texttt{e}}}^c \gets p^{\texttt{c}}$ \; \label{line:12}
                \eIf{$|t^{\texttt{b}} - t^{\texttt{c}}_+| < \delta$}{ \label{line:13}
                    $t^{\texttt{c}} \gets t^{\texttt{c}}_+$, $t^{\texttt{b}} \gets t^{\texttt{c}}_+$; \textbf{break} \; \label{line:14}
                }{
                    $t^{\texttt{c}} \gets t^{\texttt{c}}_+$, $t^{\texttt{b}} \gets t^{\texttt{c}}_+$ \; \label{line:16}
                }
            }
        }
        \Return $\mathbf{C}_{t} = \{ (t^{\texttt{b}}, p_i^c) \mid \forall i\}$ \label{line:17}
\end{algorithm}
% =========================

The $\texttt{ComOpt}(\cdot)$ algorithm in Alg.~\ref{alg:comopt} addresses the optimization
problem in~\eqref{eq:comm-opt}. The procedure begins by identifying the agent $i_\ell$ that
finishes its last task the latest in Line~\ref{line:1}, since all agents must eventually
wait for this agent before communication. The candidate communication location~$p_0$ is
set to~$p^{\texttt{f}}_{i_\ell}$, and the initial communication time is initialized to
$t^{\texttt{b}} = t^{\texttt{c}} = t^{\texttt{f}}_{i_\ell}$ in Line~\ref{line:2}.
Accordingly, the event $\mathbf{C}_{t}$ is initialized that all
agents converge to~$p_0$ at $t^{\texttt{b}}$.

Within the computational budget $T'$, the algorithm iteratively refines the initial guess
until the gap $|t^{\texttt{c}} - t^{\texttt{b}}|$ is below the convergence threshold $\Delta$
as stated in Line~\ref{line:5}. As illustrated in Fig.~\ref{fig:comm-event}, each iteration
proceeds as follows. First, the earliest-arriving agent $i_{\texttt{e}}$ is selected in
Line~\ref{line:6} by minimizing $t^{\texttt{f}}_i + \texttt{A}^\star(p^{\texttt{f}}_i, p_0)$.
Among the agents that finish last and the agents whose communication has already optimized, we identify the nearest neighbor $j_n$ in Line~\ref{line:7} using the shortest
$\texttt{A}^\star$ path from $p^{\texttt{f}}_{i_{\texttt{e}}}$ to $p_{j_n}^c$. A candidate position $p^{\texttt{c}}$ is sampled
between $p^{\texttt{f}}_{i_{\texttt{e}}}$ and $p_{j_n}^c$ in Line~\ref{line:8} using the
$\texttt{SelCom}$ function, and the corresponding arrival time $t_{\texttt{a}}$ is computed
in Line~\ref{line:9}. The potential communication time $t^{\texttt{c}}_+$ is then defined in
Line~\ref{line:10} as the maximum of $t_{\texttt{a}}$ and the arrival times of all other
agents at their current positions.
If the the communication quality between candidate $p^{\texttt{c}}$ and $p_{j_n}^c$ exceeds
the threshold $\delta$, and the updated communication time $t^{\texttt{c}}_+$ is earlier
than the current best $t^{\texttt{b}}$, then the location~$p^{\texttt{c}}$ is accepted as the new
position of agent $i_{\texttt{e}}$ in Line~\ref{line:12}. The communication time is updated
accordingly in Line~\ref{line:14} and Line~\ref{line:16}, and convergence is reached once the difference
$|t^{\texttt{b}} - t^{\texttt{c}}_+|$ falls below~$\delta$, which terminates the loop early.
Lastly, the algorithm outputs the optimized communication event
$\mathbf{C}_{t}$, including the
agent positions $\{p_i^{\texttt{c}}\}$ and the synchronized time $t^{\texttt{b}}$.
This event satisfies $G(t^{\texttt{b}}) \in \mathcal{G}_{\texttt{c}}$
and minimizes the maximum delay $\textbf{max}_i (t^{\texttt{b}} - t^{\texttt{f}}_i)$,
as required by~\eqref{eq:comm-opt}.

% ================================
% ================================
\vspace{-0.8em} 
\subsection{Overall Analysis}
\label{subsec:online}

%=============================
\begin{figure}[!t]
    \centering
    \includegraphics[width=0.85 \linewidth, height=0.42\linewidth]{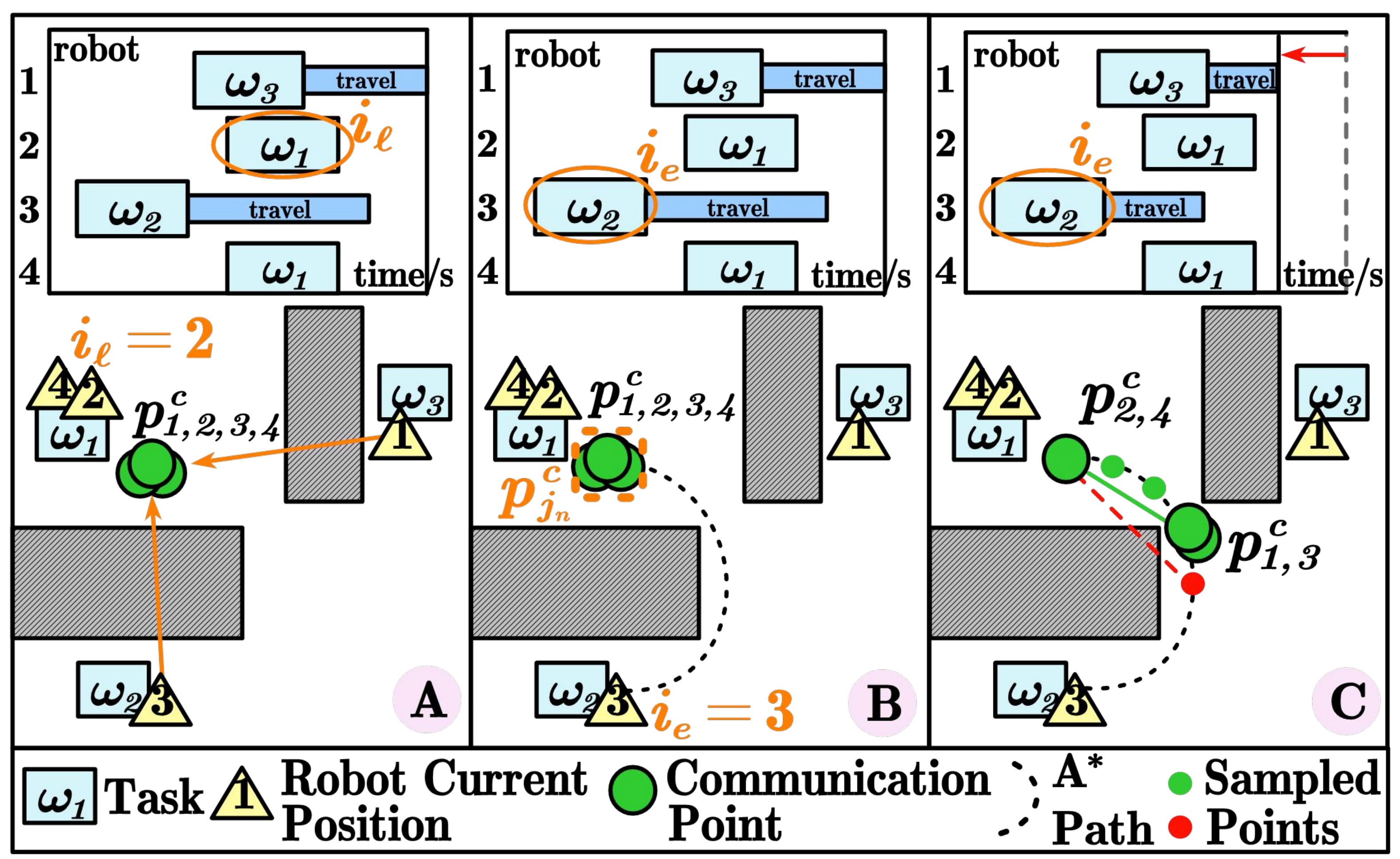}
    \vspace{-3mm}
    \caption{Optimization of communication events, showing the selection of the earliest-arriving
agent and the sampling of candidate communication points to achieve synchronized connectivity.}
    \label{fig:comm-event}
    \vspace{0mm}
  \end{figure}
%==============================
%==============================
\subsubsection{Online Execution}
\label{subsubsec:online-execution}

To handle dynamically emerging tasks, we adopt an online execution
scheme. As illustrated in Fig.~\ref{fig:framework}, agents discover new
tasks while moving, incrementally updating the task set
$\mathbf{\Omega}_t$. Each agent executes its current local plan
$\tau_i$ and progresses toward its next communication location.

To handle uncertainties such as varying
execution times and navigation delays, we adopt simple
adaptation mechanisms: for each assigned task, the corresponding agents
wait at the task region until all members have arrived before
execution; at each communication event, all agents wait until every
agent has reached its designated communication location, ensuring
$G(t^{\texttt{c}}) \in \mathcal{G}_{\texttt{c}}$. 
This maintains both temporal and communication constraints, 
even when timing deviations occur.
At each communication event, agents exchange information about newly
discovered tasks. A designated agent aggregates these into
$\mathbf{\Omega}_t$ and invokes $\texttt{CoCoPlan}$ to compute updated
collective plans and the next communication event~$\mathbf{C}$. The
team then follows the new plan, and this cycle of execution,
communication, and replanning repeats until the mission ends.

%==============================
\subsubsection{Complexity Analysis}
\label{subsubsec:complexity}

The complexity of Alg.~\ref{alg:cocoplan} is dominated by its
branch-and-bound (BnB) search. With $m$ tasks and $N$ agents, a single
expansion step can generate up to $\mathcal{O}(m \cdot N!)$ child nodes,
leading to a worst-case total of $\mathcal{O}((m \cdot N!)^m)$ nodes.
Per node, task expansion and bound evaluation are dominated by
$\mathcal{O}(m \cdot N!)$, so the overall worst-case complexity remains
$\mathcal{O}((m \cdot N!)^m)$. In practice, pruning substantially
reduces the number of explored nodes.

% ===============================

\begin{lemma}\label{lem:alg2_graph_connect}
Alg.~\ref{alg:comopt} returns communication locations such that the
communication graph $G(t^{\texttt{c}})$ is connected.
\end{lemma}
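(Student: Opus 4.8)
The plan is to prove the claim by induction over the iterations of the \texttt{while} loop in Alg.~\ref{alg:comopt}, with the loop invariant that the current communication event $\mathbf{C}_{t}$ always induces a connected communication graph on $\mathcal{N}$ when evaluated at its common time. First I would establish the base case. After Lines~\ref{line:1}--\ref{line:3}, every agent is assigned the same position $p_{0}=p^{\texttt{f}}_{i_{\ell}}$ at the same time $t^{\texttt{b}}$; all pairwise distances are zero, so every pair satisfies $Q_{ij}>\delta$ and every link $e_{ij}$ belongs to $\mathcal{E}$. Hence the induced graph is complete and, in particular, connected, so the invariant holds before the first iteration.

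For the inductive step I would show that one iteration preserves the invariant. The event $\mathbf{C}_{t}$ is modified only in Line~\ref{line:12}, where the position of the earliest-arriving agent $i_{\texttt{e}}$ is replaced by the sampled point $p^{\texttt{c}}$, and this assignment is reached only when the guard in Line~\ref{line:11} holds, in particular $Q_{c,\,p^{c}_{j_{n}}}>\delta$. By the definition of $\mathcal{E}(t^{\texttt{c}})$ in Sec.~\ref{subsec:comm-const}, this is exactly the statement that the link $e_{i_{\texttt{e}}j_{n}}$ is present in the updated configuration; note that no property of \texttt{SelCom} is needed, since a bad sample would simply fail this guard and be discarded. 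The neighbor $j_{n}$ is chosen in Line~\ref{line:7} among the agents still positioned at $p_{0}$ together with the agents whose positions were fixed in earlier iterations; by the induction hypothesis each such agent is already connected to the rest of the team through the part of $\mathbf{C}_{t}$ that the current update leaves untouched. Therefore attaching $i_{\texttt{e}}$ to $j_{n}$ yields a connected graph again, and the invariant is restored.

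Finally I would treat termination. The loop exits either because the budget $T'$ is spent or because the gap falls below the threshold $\Delta$; in both cases Line~\ref{line:17} returns the current $\mathbf{C}_{t}$, whose common time is the final value of $t^{\texttt{c}}$ (kept equal to $t^{\texttt{b}}$ by the updates in Lines~\ref{line:14} and~\ref{line:16}). By the invariant this event induces a connected graph, i.e.\ $G(t^{\texttt{c}})\in\mathcal{G}_{\texttt{c}}$, which is the assertion of the lemma.

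The step I expect to be the main obstacle is the inductive one, specifically excluding the possibility that relocating $i_{\texttt{e}}$ disconnects agents that had previously attached themselves---directly or through a chain---to $i_{\texttt{e}}$'s old location. To make the argument airtight I would strengthen the invariant into an explicit \emph{attachment forest}: the agents still at $p_{0}$ form a clique, and every relocated agent records the already-placed neighbor $j_{n}$ through which it was connected, so that following these pointers always reaches the core clique. One then checks that each accepted update merely adds a new leaf, or re-roots an existing leaf, onto a node that is already reachable from the core---this uses crucially that $j_{n}$ in Line~\ref{line:7} ranges only over already-placed agents, hence never over an agent that currently reaches the core through $i_{\texttt{e}}$---so no accepted move can create a separated component. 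With this strengthened invariant the base case is immediate, the bookkeeping in the inductive step is routine, and the termination argument above goes through unchanged.
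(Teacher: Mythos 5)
Your proof follows essentially the same route as the paper's own (sketch) argument: induction over the loop iterations, with the base case that all agents start at the common point $p_0$ (hence a connected, indeed complete, graph) and the inductive step that each accepted update attaches the relocated agent $i_{\texttt{e}}$ to an already-fixed neighbor $j_{n}$ via the quality check in Line~\ref{line:11}. Your additional attachment-forest bookkeeping, which rules out disconnecting agents that had previously reached the core through $i_{\texttt{e}}$'s old location, is a sound tightening of a point the paper's sketch leaves implicit, but it does not change the underlying approach.
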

\begin{proof}
(Sketch)
All agents are initialized at the same communication point $p_0$, so
$G(t^{\texttt{c}})$ is connected. At each iteration, the algorithm
updates a single agent $i_e$ to a new point $p_{i_e}$ that preserves a
communication link to already–fixed point~$p_{j_n}$, thus adding a
node adjacent to the existing connected set. Repeating this for all
agents preserves connectivity by induction.
\end{proof}

\begin{theorem}
\label{thm:cocoplan_opt}
Assume that communication events recur indefinitely and that at
every communication time Alg.~\ref{alg:cocoplan} is solved to
optimality for the adaptive objective~\eqref{eq:adaptive_objective}.
Then on each communication cycle $[t_k^{\texttt{c}}, t_{k+1}^{\texttt{c}})$
the returned  plan is feasible w.r.t.
\eqref{eq:task-constraints} and \eqref{eq:comm-graph},
and locally maximizes the task completion rate over that cycle.
Moreover,
\eqref{eq:adaptive_objective} provides a local approximation of the
global infinite-horizon objective~\eqref{eq:objective}; 
Under stationary task arrival and ergodicity assumptions, 
the two optima coincide.
\end{theorem}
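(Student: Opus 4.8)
The plan is to prove the three assertions in sequence, using Lemma~\ref{lem:alg2_graph_connect} and a decomposition of the infinite horizon into communication cycles as the two main tools. For the first assertion I would fix a cycle $[t_k^{\texttt{c}}, t_{k+1}^{\texttt{c}})$ and read feasibility directly off the construction of Alg.~\ref{alg:cocoplan}. Feasibility w.r.t.\ \eqref{eq:task-constraints} holds because \texttt{ExpNode} only ever inserts a task drawn from the set $\mathcal{F}$ returned by \texttt{GetTasks}, which by definition respects the temporal relations $\mathcal{C}_t$, while the scheduling LP in the bounding step and the start/finish-time inequalities listed as constraints in \eqref{eq:adaptive_objective} enforce the interval relations of $\mathcal{P}_t$ and the required travel times; feasibility w.r.t.\ \eqref{eq:comm-graph} is exactly the conclusion of Lemma~\ref{lem:alg2_graph_connect}, since \texttt{ComOpt} returns an event with $G(t^{\texttt{c}})\in\mathcal{G}_{\texttt{c}}$. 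Local optimality is then immediate from the hypothesis that the BnB search is solved to optimality for \eqref{eq:adaptive_objective}: the returned plan attains the maximal value of the per-cycle rate $|\{\omega_m : t_m^{\texttt{f}} \le t_{k+1}^{\texttt{c}}\}|/(t_{k+1}^{\texttt{c}} - t_k^{\texttt{c}})$ over all feasible plans and all admissible endpoints $t_{k+1}^{\texttt{c}}$.

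For the second assertion I would make the phrase ``local approximation'' precise through a renewal-type decomposition. Writing $n_k$ for the number of tasks completed during cycle $k$ and $L_k = t_{k+1}^{\texttt{c}} - t_k^{\texttt{c}}$ for its length, the team-wise protocol guarantees that every task scheduled within a cycle finishes within it, so $N_T(J) = \sum_{k \le K(T)} n_k + \mathcal{O}(1)$, where $K(T)$ counts completed cycles by time $T$, and $T = \sum_{k \le K(T)} L_k + \mathcal{O}(L_{K(T)+1})$. Dividing and passing to the limit yields $\eta(J) = \lim_{K\to\infty}\big(\sum_{k\le K} n_k\big)/\big(\sum_{k\le K} L_k\big)$, i.e.\ the global rate is the length-weighted average of the per-cycle rates $r_k = n_k/L_k$ that \eqref{eq:adaptive_objective} maximizes. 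This displays \eqref{eq:adaptive_objective} as the single-cycle truncation of \eqref{eq:objective}, and it makes transparent why the two optima can differ in general: the endpoint $t_{k+1}^{\texttt{c}}$ and the agents' terminal positions chosen by the greedy per-cycle solve determine which tasks become detectable, and hence feasible, in cycle $k+1$.

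For the third assertion I would invoke a renewal--reward / Birkhoff ergodic argument. Under stationary task arrivals and an ergodic induced cycle process, each per-cycle instance of \eqref{eq:adaptive_objective} is a copy in distribution of one fixed problem, and the communication event resets the team configuration so that no additional state is carried across cycles; then $\eta(J) = \mathbb{E}[n]/\mathbb{E}[L]$ for the stationary policy. The cycle-wise-optimal policy attains the common optimal value $r^\star$ of \eqref{eq:adaptive_objective} on every cycle, hence $\eta = r^\star$, whereas any feasible policy has $r_k \le r^\star$ on each cycle and therefore a length-weighted average at most $r^\star$; combining these gives $\sup_J \eta(J) = r^\star$, so the optimum of the local surrogate coincides with that of \eqref{eq:objective}. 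The step I expect to be the main obstacle is precisely the claim that every cycle shares the same optimal value $r^\star$: this hinges on the absence of inter-cycle coupling, which stationarity supplies only if the terminal configuration of one cycle does not constrain the feasible task set of the next. I would close this gap either by treating the per-cycle optimum as an i.i.d.\ random variable and using a Dinkelbach-type fractional-programming argument to move from pointwise ratio optimality to optimality of $\mathbb{E}[n]/\mathbb{E}[L]$, or by folding a ``distributional reset at each communication event'' explicitly into the stationarity-and-ergodicity hypothesis and relegating the fractional subtlety to a remark.
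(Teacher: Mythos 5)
Your proposal follows essentially the same route as the paper's own sketch: feasibility of each cycle by construction (temporal constraints via \texttt{GetTasks}/\texttt{ExpNode} and the scheduling step, connectivity via Lemma~\ref{lem:alg2_graph_connect}), local optimality directly from the hypothesis that the branch-and-bound search is solved to optimality, and a renewal--reward decomposition of $\eta(J)$ into per-cycle rates for the final claim. You go one step further than the paper by correctly observing that maximizing each ratio $n_k/L_k$ does not by itself maximize the length-weighted average $\sum_k n_k / \sum_k L_k$ when cycles are coupled through terminal positions and the detectable task set; the paper's sketch silently absorbs this into its ``stationary, ergodic'' hypothesis, whereas your proposed Dinkelbach-type or distributional-reset fixes make that assumption explicit, which is a genuine refinement rather than a departure.
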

\begin{proof}
(Sketch)
Feasibility on each cycle follows because Alg.~\ref{alg:cocoplan}
enforces the temporal constraints~\eqref{eq:task-constraints} and calls
Alg.~\ref{alg:comopt}, which guarantees connectivity via
Lemma~\ref{lem:alg2_graph_connect}. Solving the branch-and-bound search
to optimality ensures that the returned plan maximizes
\eqref{eq:adaptive_objective} over that cycle, giving a locally optimal
approximation of~\eqref{eq:objective}. Under stationary, constant-rate
task discovery and ergodic cycles, the renewal–reward theorem implies
that the long-run average completion rate equals the time average of the
per-cycle rates, so local and global optimality are equivalent.
\end{proof}

%%========================================
\vspace{-0.8em} 
%========================================
\section{Numerical Experiments} \label{sec:experiments}

To validate the proposed method, extensive numerical experiments
are conducted across multiple scenarios. The method is implemented in \texttt{Python3}
within the \texttt{ROS} framework, and evaluated on a workstation equipped with an
Intel(R) i7 24-core CPU and an RTX-4070Ti GPU.
Simulation and hardware videos are
provided in the supplementary material.

\vspace{-0.8em} 
% ---------------------------
\subsection{Workspace and Task Description}
\label{subsec:workspace}

Experiments employ $N=10$ heterogeneous robots in two challenging environments.
(I) The {DARPA SubT challenge environment}, measuring $40~\text{m} \times 60~\text{m}$,
contains narrow passages that test maneuverability under constrained conditions.
(II) The {subterranean caves}, with a size of $40~\text{m} \times 70~\text{m}$,
include irregular obstacles that serve as a complex setting for validation.
All robots operate with identical parameters: velocity range with maximum $v^{\texttt{max}}_i = 2$~m/s, sensing range $d_i = 8$~m, and communication quality determined by the model in Section~\ref{subsec:comm-const}.
The method activates during communication events.
Task sequences $\tau_i$ are executed using the native SLAM-based navigation package.

\vspace{-0.8em} 
% ----------------------------
\subsection{Simulation Results}
\label{subsec:simulation-results}

%=============================
\begin{figure}[!t]
    \centering
    \includegraphics[width=0.98\linewidth]{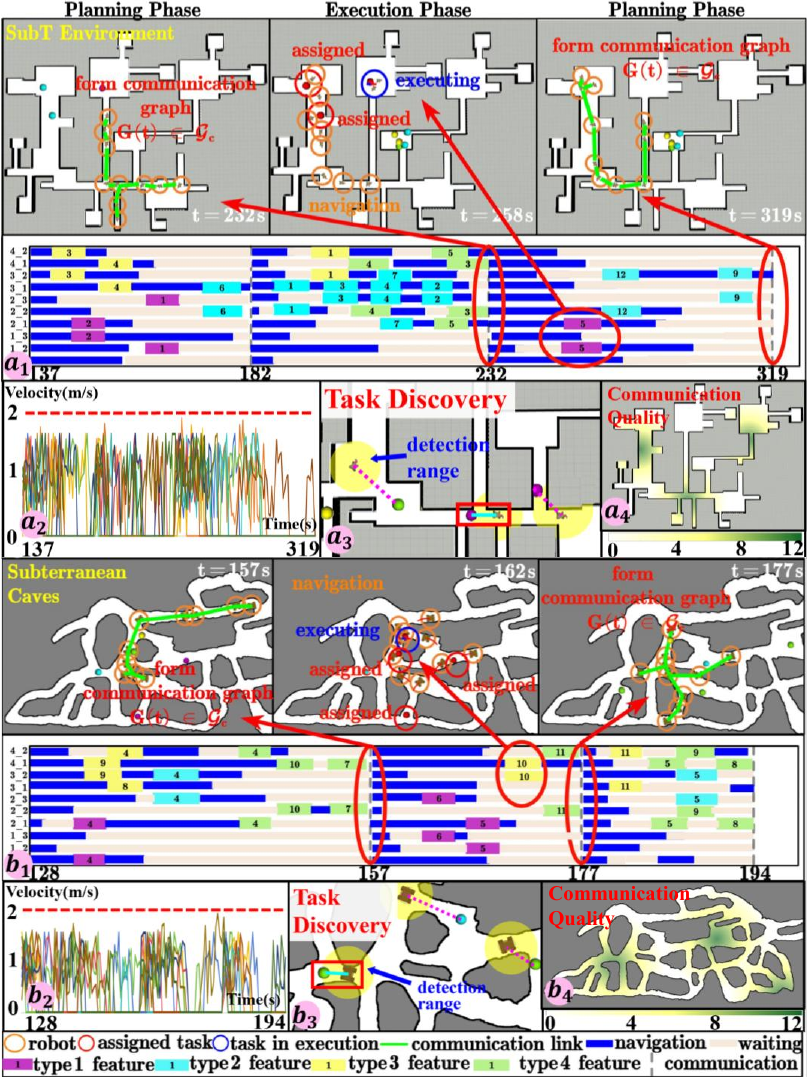}
    \vspace{-3mm}
    \caption{Results of the proposed method in simulation, including representative
    keyframes and task planning outcomes in the {DARPA SubT challenge environment} (\textbf{top})
    and the {subterranean caves} (\textbf{bottom})}.
    \label{fig:sim-result}
\end{figure}
%==============================

The proposed method is evaluated in both environments, as shown in Fig.~\ref{fig:sim-result}.
Each algorithm invocation takes~15s of computation with~100 nodes. In the SubT scenario,
four task types are considered: Type~1 fault detection precedes Type~2 cleaning, and Type~3
indoor delivery precedes Type~4 equipment maintenance. In Fig.~\ref{fig:sim-result}(a), at 232s,
the robots establish communication and synchronize their plans. By 258s, execution begins, with
2 robots performing fault detection. At 319s, the communication topology is reorganized for
the next planning cycle. In the subterranean caves, the task set includes: Type~1 rescue precedes
Type~2 cargo transportation, and Type~3 exploration precedes Type~4 mineral sampling. In
Fig.~\ref{fig:sim-result}(b), a communication graph forms at 157s to initiate task allocation. By 162s,
robots navigate to their goals, with exploration tasks completed before mineral sampling due
to temporal constraints. At 177s, the communication topology is reorganized for continued coordination.
These results demonstrate real-time task allocation and communication topology adaptation.
The system achieves 4 communication events and 17 tasks in SubT, and 4 communication events
with 16 tasks in subterranean caves.

\vspace{-0.8em} 
% ------------------------------
\subsection{Comparison with Baselines}
\label{subsec:results}

To evaluate the proposed method, it is compared against six baselines.
(I) FIX triggers planning when accumulated tasks reach a threshold $N$, and is
adapted to enforce temporal constraints.
(II) FPMR~\cite{zijlstra2024multi} uses a fixed communication point; for fairness,
it is applied only to communication optimization while retaining the proposed
task allocation strategy.
(III) FRDT~\cite{kantaros2019temporal} gathers robots around a fixed leader to
communicate, and is likewise used only for communication optimization.
(IV) FIMR~\cite{aragues2020intermittent} plans at regular intervals $T_{\texttt{c}}$,
used only for communication optimization to preserve adaptivity.
(V) RING~\cite{guo2018multirobot} pairs robots with fixed partners and propagates
information along the chain; it is modified to handle temporal and capability
constraints.
(VI) Greedy communicates and executes any feasible tasks whenever robots meet,
and is adapted to account for capability constraints during task allocation.
Four metrics are reported: finished tasks, communication count,
communication interval, and $t_{\texttt{c}}-t_f$, the gap between the
last task completion and the next communication. All methods are tested in both
scenarios over three trials, each combining spatial patterns (clustered, uniform,
sparse) and temporal patterns (spiky, uniform, low-frequency). Since the selected
baselines were developed under slightly different (often simpler) settings, the
comparison is intended to highlight performance in complex unknown scenarios
rather than claim narrow superiority.

% +++++++++++++++++++++++=
\subsubsection{Performance in Nominal Scenarios}
\label{subsubsec:comparison-existing-methods}

% =======================
\begin{table}[!t]
  \centering
  \caption{\uppercase{Comparison of baselines across two scenarios.}}
  \footnotesize
  \setlength{\tabcolsep}{3pt}
  \renewcommand{\arraystretch}{1.1}
  \begin{tabular}{@{}l l c c c c@{}}
  \toprule
  \toprule
  \makecell{\textbf{Env.}} & \makecell{\textbf{Strategy}} & \makecell{\textbf{Finished}\\\textbf{Tasks (\#)}} & \makecell{\textbf{Comm.}\\\textbf{Num. (\#)}} & \makecell{\textbf{Comm.}\\\textbf{Int. (s)}} & \makecell{\textbf{$t_{\texttt{c}}-t_f$}\\\textbf{(s)}} \\
  \midrule
  \multirow{9}{*}{SubT}
    & FIX (N=3)                   & 74.7±3.1   & 34.0±0.0   & 22.2±10.4   & 2.6±4.8 \\
    & FIX (N=10)                  & 95.3±6.6   & 12.0±0.0   & 51.8±18.2   & 3.0±4.6 \\
    & FPMR                         & 89.7±1.0   & 9.3±1.3    & 78.7±60.3   & 19.3±8.3 \\
    & FRDT                         & 94.3±3.1   & 10.3±1.9   & 69.8±54.8   & 17.2±12.9 \\
    & FIMR ($T_{\texttt{c}}=35$)   & 96.0±5.0   & 17.7±1.3   & 35.0±0.0    & 6.2±6.1 \\
    & FIMR ($T_{\texttt{c}}=80$)   & 87.7±4.0   & 9.0±0.0    & 80.0±0.0    & 16.4±18.9 \\
    & RING                        & 52.3±6.8   & 7.0±0.8    & 156.1±75.1  & 85.0±42.7 \\
    & Greedy                      & 30.0±10.6   & 166.3±10.8  & --  & -- \\
    & \textbf{Ours}                & \textbf{99.0±0.8} & \textbf{13.0±0.8} & \textbf{46.7±44.0} & \textbf{2.8±2.5} \\
    \midrule
  \multirow{9}{*}{Caves}
    & FIX (N=3)                   & 81.7±5.9   & 34.7±0.4   & 21.6±11.6   & 2.6±5.2 \\
    & FIX (N=10)                  & 95.7±4.8   & 12.0±0.0   & 52.8±17.8   & 4.2±6.1 \\
    & FPMR                         & 96.7±2.5   & 10.3±1.7   & 68.3±55.6   & 19.6±9.9 \\
    & FRDT                         & 84.3±1.0   & 10.0±0.8   & 82.0±76.6   & 18.2±13.1 \\
    & FIMR ($T_{\texttt{c}}=35$)   & 83.7±4.5   & 20.7±0.4   & 35.0±0.0    & 5.9±5.9 \\
    & FIMR ($T_{\texttt{c}}=80$)   & 90.7±6.8   & 8.7±0.4    & 80.0±0.0    & 19.8±22.6 \\
    & RING                        & 49.0±8.3   & 7.0±0.8    & 183.0±84.8  & 106.2±53.6 \\
    & Greedy                      & 57.7±3.3   & 166.7±17.5  & --  & -- \\
    & \textbf{Ours}                & \textbf{98.7±0.4} & \textbf{13.7±0.4} & \textbf{45.1±43.4} & \textbf{2.2±4.6} \\
  \bottomrule
  \bottomrule
  \end{tabular}
  \label{tab:comm_results}
\end{table}
% =======================

As shown in Table~\ref{tab:comm_results}, the proposed method achieves nearly
double the task completion of RING, reaching 99.0 vs. 52.3 in SubT and 98.7
vs. 49.0 in the caves. It also completes nearly three times and twice as many
tasks as Greedy, while requiring 92.2\% and 91.8\% fewer communication events
in the two scenarios. Compared to FIX with $N=3$, substantially more tasks are
completed (99.0 vs. 74.7 and 98.7 vs. 81.7). Against FIX with $N=10$, the
adaptive objective markedly reduces variance (0.8 vs. 6.6 and 0.4 vs. 4.8).
FIMR with $T_{\texttt{c}}=35$s exhibits instability across environments, with a
14\% drop in completion, whereas the proposed method remains consistently high.
FIMR with $T_{\texttt{c}}=80$s incurs long idle times and large variance.
FPMR and FRDT are constrained by fixed communication, causing significant delays
and exposing the drawback of rigid synchronization relative to adaptive
coordination. Overall, the proposed method consistently delivers high task
throughput while limiting unnecessary coordination overhead across diverse
task distributions.

%=============================
\begin{figure}[th!]
    \centering
    \includegraphics[width=0.9\linewidth, height=0.4\linewidth]{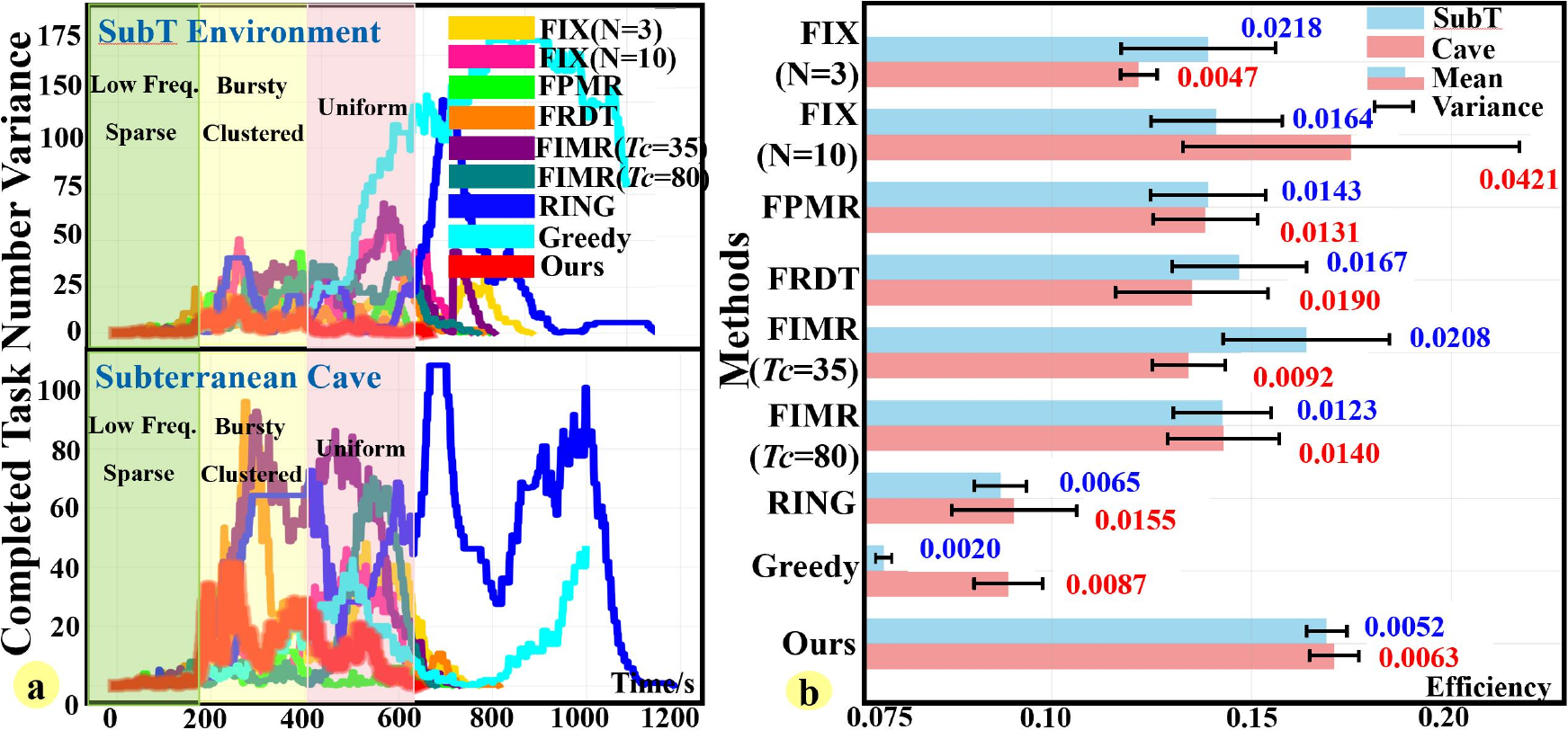}
    \vspace{-3mm}
    \caption{Robustness analysis: instantaneous variance of completed tasks (\textbf{left})
    and task-completion efficiency (\textbf{right}).}
    \label{fig:robust-result}
    \vspace{-1mm}
\end{figure}
% ==============================

%=============================
\begin{figure}[t!]
    \centering
    \includegraphics[width=0.9\linewidth, height=0.35\linewidth]{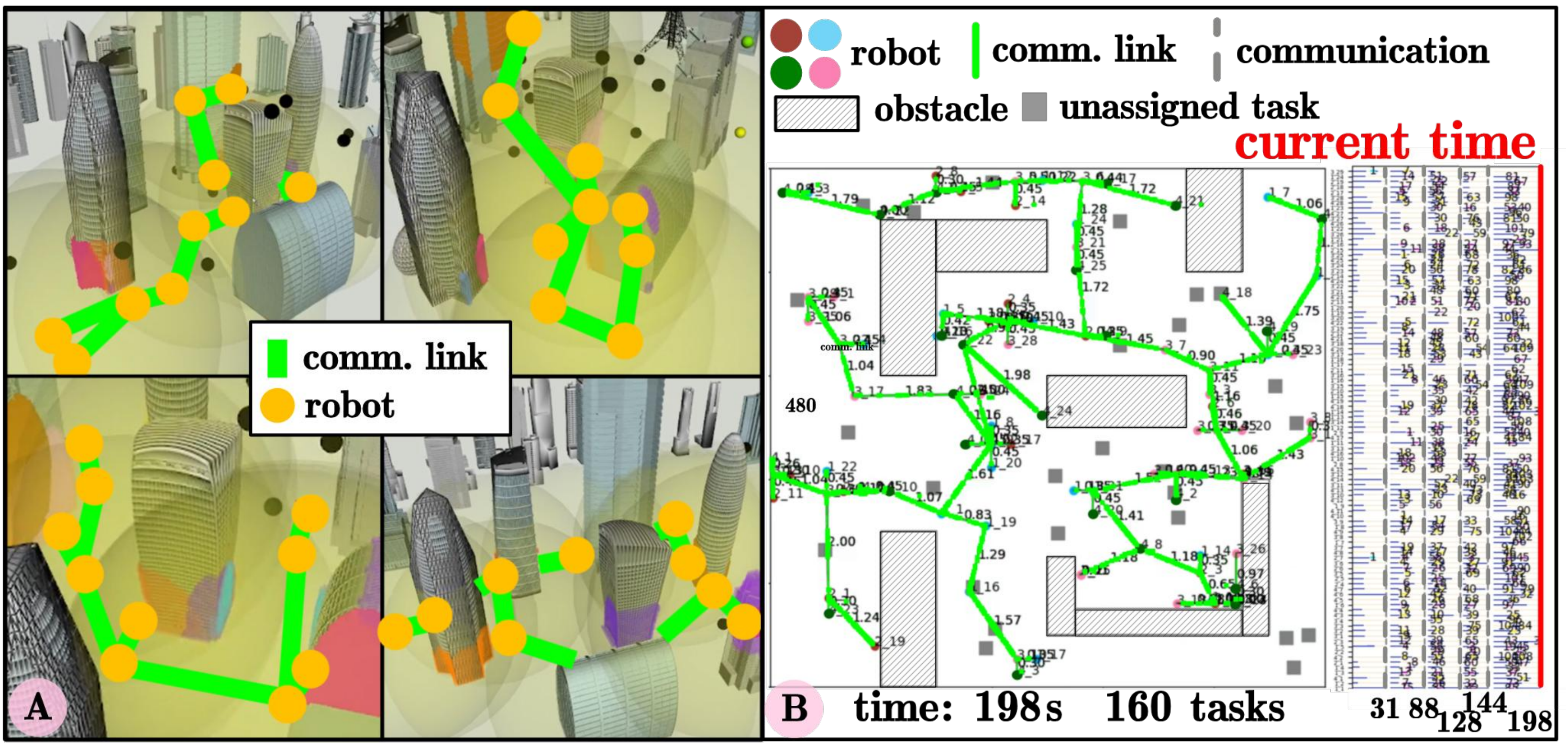}
    \vspace{-2mm}
    \caption{Snapshots of extensive experiments showing operation in a 3D urban environment (\textbf{left})
    and a large-scale deployment with 100 robots (\textbf{right}).}
    \label{fig:extensive}
    \vspace{-1mm}
\end{figure}
%==============================

%==============================
\begin{figure*}[!t]
    \centering
    \includegraphics[width=0.99\linewidth,height=0.17\linewidth]{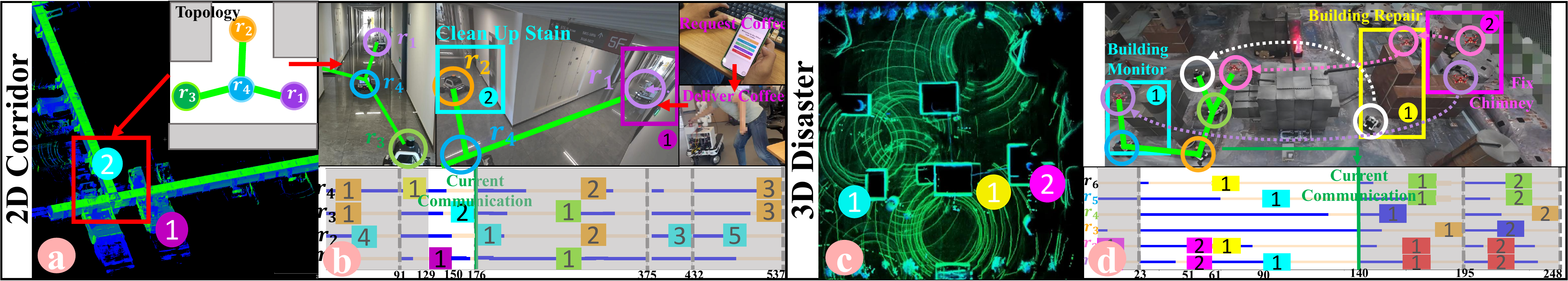}
    \vspace{-2mm}
    \caption{Snapshots of hardware experiments:
      \textbf{(a)} Experimental setup for service robots in office;
      \textbf{(b)} Key snapshots of task execution;
      \textbf{(c)} Hardware deployment in 3D disaster response;
      \textbf{(d)} Key snapshots of collaborative task execution.}
    \label{fig:hardware}
    \vspace{-3mm}
\end{figure*}
%==============================

% +++++++++++++++++++++++++
\subsubsection{Robustness under Varying Task Distributions}
\label{subsubsec:robust-analysis}

To assess robustness under unknown spatiotemporal task distributions, a 600s
experiment is conducted with three phases. During the first 200s, tasks follow
a spatially sparse and temporally low-frequency pattern. Over $[200\text{s},
400\text{s}]$, they shift to a spatially clustered and temporally bursty pattern.
From 400s to 600s, the distribution becomes spatially and temporally uniform.
Fig.~\ref{fig:robust-result}(a) reports the instantaneous variance of completed
tasks. The proposed method maintains consistently low variance under changing
conditions, whereas baselines exhibit sharp fluctuations.
Fig.~\ref{fig:robust-result}(b) compares task-completion efficiency, defined as
the slope of the task-completion curve. The proposed method sustains high mean
efficiency with low variance in both environments (0.173 $\pm$ 0.005 in SubT and
0.176 $\pm$ 0.006 in caves), while other baselines show substantial volatility.
Greedy yields the lowest mean efficiency, and FIX with $N=10$ exhibits high
variance, underscoring the weakness of fixed thresholds.

%% These observations confirm that the proposed adaptive strategy dynamically balances
%% task allocation and communication, overcoming the limitations of fixed strategies
%% in unknown environments.

\vspace{-1.08em} 

% ------------------------------
\subsection{Generalizations}
\label{subsec:extensive-experiments}

% +++++++++++++++++++++++++
\subsubsection{3D Environment}
\label{subsec:3d-environment}

To evaluate performance in complex three-dimensional spaces,
an urban environment of $60 \times 40 \times 20\,\text{m}^3$ is designed,
deploying a heterogeneous team of five UAVs and five UGVs.
Task types are designed to leverage aerial capabilities, 
such as high-altitude damage inspection and building fire-rescue missions.
Figure~\ref{fig:extensive} (left) illustrates the resulting communication topologies
that emerge during task execution.
The system successfully completes 45 tasks within 415s while triggering only 11
communication events, highlighting scalability of our method in 3D settings.

% ------------------------------
\subsubsection{Large-scale Fleets}
\label{subsubsec:scalability}

Scalability is further assessed in a large-scale deployment of $N=100$ robots
performing tasks analogous to those in the SubT environment.
As shown in Fig.~\ref{fig:extensive}, the system completes 160 tasks within 198s,
requiring only six communication events to maintain coordination.
All tasks satisfy their temporal constraints, and the computation times is 5-15s.
These results demonstrate the capability to operate for large-scale robotic fleets.

% -------------------------------
\subsection{Hardware Experiments}
\label{subsec:hardware-experiments}

To assess practicality, hardware experiments are conducted in office service and
3D disaster-response environments, demonstrating feasibility in real-world settings.

\subsubsection{Service Robots in Office}
The first experiment uses a $138 \times 114 \times 3,\text{m}^3$ office environment
(Fig.~\ref{fig:hardware}) with four UGVs: one delivery, one maintenance, and two
cleaning robots. Each robot is equipped with ad-hoc networking units
(Fig.~\ref{fig:communication-constraints}) and navigation modules. Five task
types are considered with temporal priorities: fault detection (highest), indoor
delivery and garbage collection (medium), and stain removal and equipment
maintenance (lowest). At 80s, a floor stain is detected along with a coffee
delivery request. By 91s, a communication graph is formed and both tasks are
assigned. The delivery robot finishes at 129s (purple block). A cleaning robot
then handles a stain detected at 137s, starting removal at 150s (blue block).
At 176s, a new communication topology is established for subsequent planning.
Overall, 11 tasks are completed within 537s with only 5 communication events,
indicating responsive replanning under dynamic and human-triggered
task arrivals.

\subsubsection{3D Disaster Response}
The second experiment takes place in a $16 \times 14 \times 26,\text{m}^3$ disaster
environment (Fig.~\ref{fig:hardware}). Six task types are considered, including
firefighting before rescue and chimney servicing before building repair, along
with resource collection and water protection. The team includes two UAVs for
monitoring/maintenance and four UGVs: two for firefighting/repair and two for
rescue. The first planning round completes at 23s, after which UAVs execute
chimney repair, completed at 51s (pink block). Maintenance requires UAV--UGV
collaboration at 61s and again at 90s, with communication re-established after
each phase to synchronize actions. In total, 12 tasks are completed in 248s,
demonstrating timely responsiveness and effective ground--air coordination in
emergency conditions.

%%========================================
\vspace{-0.8em}
\section{Conclusion \& Future work} \label{sec:conclusion}

This work presents a branch-and-bound framework that co-optimizes task collaboration 
and team-wise communication online, 
for multi-agent systems operating in dynamic and unknown environments.
The proposed approach achieves efficient information exchange
while meeting temporal task requirements, as validated through extensive simulations
and hardware experiments.
Future research will explore
the hierarchical combination of global and local coordination protocols.

%%========================================

\vspace{-0.8em}
%========================================
\bibliographystyle{IEEEtran}
\bibliography{contents/arxiv}

\end{document}